\newcommand{\paramAB}{\theta_{xy}}
\newcommand{\paramBA}{\theta_{yx}} 
\newtheorem{theorem}{Theorem}
\newtheorem{definition}{Definition}
\icmltitlerunning{Dual Supervised Learning}
\begin{document} 
\twocolumn[
\icmltitle{Dual Supervised Learning}



\icmlsetsymbol{equal}{*}

\begin{icmlauthorlist}
	\icmlauthor{Yingce Xia}{ustc}
	\icmlauthor{Tao Qin}{msra}
	\icmlauthor{Wei Chen}{msra}
	\icmlauthor{Jiang Bian}{msra}
    \icmlauthor{Nenghai Yu}{ustc}
	\icmlauthor{Tie-Yan Liu}{msra}
\end{icmlauthorlist}

\icmlaffiliation{ustc}{School of Information Science and Technology, University of Science and Technology of China, Hefei, Anhui, China}
\icmlaffiliation{msra}{Microsoft Research, Beijing, China}

\icmlcorrespondingauthor{Tao Qin}{taoqin@microsoft.com}

\icmlkeywords{dual learning, deep learning}

\vskip 0.3in
]


\printAffiliationsAndNotice{}  

\begin{abstract}
Many supervised learning tasks are emerged in dual forms, e.g., English-to-French translation vs. French-to-English translation, speech recognition vs. text to speech, and image classification vs. image generation. Two dual tasks have intrinsic connections with each other due to the probabilistic correlation between their models. This connection is, however, not effectively utilized today, since people usually train the models of two dual tasks separately and independently. In this work, we propose training the models of two dual tasks simultaneously, and explicitly exploiting the probabilistic correlation between them to regularize the training process. For ease of reference, we call the proposed approach \emph{dual supervised learning}. We demonstrate that dual supervised learning can improve the practical performances of both tasks, for various applications including machine translation, image processing, and sentiment analysis.
\end{abstract}

\section{Introduction}\label{sec:intro}
Deep learning brings state-of-the-art results to many artificial intelligence tasks, such as neural machine translation \cite{wu2016google}, image classification \cite{he2015deep,he2016identity}, image generation \cite{van2016pixel,van2016conditional}, speech recognition \cite{graves2013speech,amodei2016deep}, and speech generation/synthesis \cite{oord2016wavenet}. 

Interestingly, we find that many of the aforementioned AI tasks are emerged in dual forms, i.e., the input and output of one task are exactly the output and input of the other task respectively. Examples include translation from language A to language B vs. translation from language B to A, image classification vs. image generation, and speech recognition vs. speech synthesis. Even more interestingly (and somehow surprisingly), this natural duality is largely ignored in the current practice of machine learning. That is, despite the fact that two tasks are dual to each other, people usually train them independently and separately. Then a question arises: Can we exploit the duality between two tasks, so as to achieve better performance for both of them? In this work, we give a positive answer to the question.

To exploit the duality, we formulate a new learning scheme, which involves two tasks: a primal task and its dual task. The primal task takes a sample from space $\mathcal{X}$ as input and maps to space $\mathcal{Y}$, and the dual task takes a sample from space $\mathcal{Y}$ as input and maps to space $\mathcal{X}$. Using the language of probability, the primal task learns a conditional distribution $P(y|x;\paramAB)$ parameterized by $\paramAB$, and the dual task learns a conditional distribution $P(x|y;\paramBA)$ parameterized by $\paramBA$, where $x\in\mathcal{X}$ and $y\in\mathcal{Y}$. In the new scheme, the two dual tasks are jointly learned and their structural relationship is exploited to improve the learning effectiveness. We name this new scheme as \emph{dual supervised learning} (briefly, DSL).

There could be many different ways of exploiting the duality in DSL. In this paper, we use it as a regularization term to govern the training process. Since the joint probability $P(x,y)$ can be computed in two equivalent ways: $P(x,y)=P(x)P(y|x)=P(y)P(x|y)$, for any $x\in \mathcal{X}, y\in \mathcal{Y}$, ideally the conditional distributions of the primal and dual tasks should satisfy the following equality:
\begin{equation}
P(x)P(y|x;\paramAB)=P(y){P}(x|y;\paramBA). 
\label{eq:high_level_idea}
\end{equation}
However, if the two models (conditional distributions) are learned separately by minimizing their own loss functions (as in the current practice of machine learning), there is no guarantee that the above equation will hold. The basic idea of DSL is to jointly learn the two models $\paramAB$ and $\paramBA$ by minimizing their loss functions subject to the constraint of Eqn.\eqref{eq:high_level_idea}. By doing so, the intrinsic probabilistic connection between $\paramBA$ and $\paramAB$ are explicitly strengthened, which is supposed to push the learning process towards the right direction. To solve the constrained optimization problem of DSL, we convert the constraint Eqn.\eqref{eq:high_level_idea} to a penalty term by using the method of Lagrange multipliers~\cite{boyd2004convex}. Note that the penalty term could also be seen as a data-dependent regularization term.

To demonstrate the effectiveness of DSL, we apply it to three artificial intelligence applications~\footnote{In our experiments, we chose the most cited models with either open-source codes or enough implementation details, to ensure that we can reproduce the results reported in previous papers. All of our experiments are done on a single Telsa K40m GPU.}:

(1) \emph{Neural Machine Translation (NMT)} We first apply DSL to NMT, which formulates machine translation as a sequence-to-sequence learning problem, with the sentences in the source language as inputs and those in the target language as outputs. The input space and output space of NMT are symmetric, and there is almost no information loss while mapping from $x$ to $y$ or from $y$ to $x$. Thus, symmetric tasks in NMT fits well into the scope of DSL. Experimental studies illustrate significant accuracy improvements by applying DSL to NMT: $+2.07/0.86$ points measured by BLEU scores for English$\leftrightarrow$French translation, $+1.37/0.12$ points for English$\leftrightarrow$Germen translation and $+0.74/1.69$ points on English$\leftrightarrow$Chinese.

(2) \emph{Image Processing} We then apply DSL to image processing, in which the primal task is image classification and the dual task is image generation conditioned on category labels. Both tasks are hot research topics in the deep learning community. We choose ResNet \cite{he2015deep} as our baseline for image classification, and PixelCNN++\cite{tim2017pixel} as our baseline for image generation. Experimental results show that on CIFAR-10, DSL could reduce the error rate of ResNet-110 from $6.43$ to $5.40$ and obtain a better image generation model with both clearer images and smaller \emph{bits per dimension}. Note that these primal and dual tasks do not yield a pair of completely symmetric input and output spaces since there is information loss while mapping from an image to its class label. Therefore, our experimental studies reveal that DSL can also work well for dual tasks with information loss. 

(3) \emph{Sentiment Analysis} Finally, we apply DSL to sentiment analysis, in which the primal task is sentiment classification (i.e., to predict the sentiment of a given sentence) and the dual one is sentence generation with given sentiment polarity. Experiments on the IMDB dataset show that DSL can improve the error rate of a widely-used sentiment classification model by $0.9$ point, and can generate sentences with clearer/richer styles of sentiment expression. 

All of above experiments on real artificial intelligence applications have demonstrated that DSL can improve practical performance of both tasks, simultaneously.

\section{Framework}\label{sec:model_theory}
In this section, we formulate the problem of {\em dual supervised learning} (DSL), describe an algorithm for DSL, and discuss its connections with existing learning schemes and its application scope.

\subsection{Problem Formulation}
To exploit the duality, we formulate a new learning scheme, which involves two tasks: a primal task that takes a sample from space $\mathcal{X}$ as input and maps to space $\mathcal{Y}$, and a dual task takes a sample from space $\mathcal{Y}$ as input and maps to space $\mathcal{X}$.

Assume we have $n$ training pairs $\{(x_i,y_i)\}_{i=1}^{n}$ {\em i.i.d.} sampled from the space $\mathcal{X}\times\mathcal{Y}$ according to some unknown distribution $P$. Our goal is to reveal the bi-directional relationship between the two inputs $x$ and $y$. To be specific, we perform the following two tasks: (1) the primal learning task aims at finding a function $f: \mathcal{X}\mapsto\mathcal{Y}$ such that the prediction of $f$ for $x$ is similar to its real counterpart $y$; (2) the dual learning task aims at finding a function $g: \mathcal{Y}\mapsto\mathcal{X}$ such that the prediction of $g$ for $y$ is similar to its real counterpart $x$. The dissimilarity is penalized by a loss function. Given any $(x,y)$, let $\ell_1(f(x),y)$  and $\ell_2(g(y),x)$ denote the loss functions for $f$ and $g$ respectively, both of which are mappings from $\mathcal{X}\times\mathcal{Y}$ to $\mathbb{R}$.

A common practice to design $(f,g) $ is the maximum likelihood estimation based on the parameterized conditional distributions $P(\cdot|x; \paramAB)$ and $P(\cdot|y; \paramBA)$: 
\begin{align*}
f(x;\paramAB)&\triangleq\arg\max_{y^\prime\in\mathcal{Y}}P(y^\prime|x; \paramAB),\\
g(y;\paramBA)&\triangleq\arg\max_{x^\prime\in\mathcal{X}}P(x^\prime|y; \paramBA),
\end{align*}
where $\paramAB$ and $\paramBA$ are the parameters to be learned. 

By standard supervised learning, the primal model $f$ is learned by minimizing the empirical risk in space $\mathcal{Y}$:

\qquad\qquad$\min_{\paramAB }(1/n)\textstyle{\sum_{i=1}^n}\ell_1(f(x_i;\paramAB), y_i);$

and dual model $g$ is learned by minimizing the empirical risk in space $\mathcal{X}$:

\qquad\qquad$\min_{\paramBA }(1/n)\textstyle{\sum_{i=1}^n}\ell_2(g(y_i;\paramBA),x_i).$

Given the duality of the primal and dual tasks, if the learned primal and dual models are perfect, we should have 
\begin{equation*}
P(x)P(y|x;\paramAB)=P(y)P(x|y;\paramBA)=P(x,y), \forall x, y.
\end{equation*}
We call this property \emph{probabilistic duality}, which serves as a necessary condition for the optimality of the learned two dual models. 

By the standard supervised learning scheme, probabilistic duality is not considered during the training, and the primal and the dual models are trained independently and separately. Thus, there is no guarantee that the learned dual models can satisfy probabilistic duality. To tackle this problem, we propose explicitly reinforcing the empirical probabilistic duality of the dual modes by solving the following multi-objective optimization problem instead:
\begin{equation}
\begin{aligned}
&\text{objective 1: } \min_{\paramAB}\;(1/n)\textstyle{\sum_{i=1}^n}\ell_1(f(x_i;\paramAB), y_i), \\
&\text{objective 2: } \min_{\paramBA}\;(1/n)\textstyle{\sum_{i=1}^n}\ell_2(g(y_i;\paramBA), x_i), \\
&\text{s.t. } P(x)P(y|x;\paramAB)=P(y)P(x|y;\paramBA), \forall x,y,
\end{aligned}
\label{eq:high_level_obj}
\end{equation}
where $P(x)$ and $P(y)$ are the marginal distributions. We call this new learning scheme \emph{dual supervised learning} (abbreviated as DSL). 

We provide a simple theoretical analysis which shows that DSL has theoretical guarantees in terms of generalization bound. Since the analysis is straightforward, we put it in Appendix~\ref{app:theory_analysis}.

\subsection{Algorithm Description}
In practical artificial intelligence applications, the ground-truth marginal distributions are usually not available. As an alternative, we use the empirical marginal distributions $\hat{P}(x)$ and $\hat{P}(y)$ to fulfill the constraint in Eqn.\eqref{eq:high_level_obj}.

To solve the DSL problem, following the common practice in constraint optimization, we introduce Lagrange multipliers and add the equality constraint of probabilistic duality into the objective functions. First, we convert the probabilistic duality constraint into the following regularization term (with the empirical marginal distributions included): 
\begin{equation}
\begin{aligned}
\ell_{\text{duality}} =&(\log\hat{P}(x)+\log P(y|x; \paramAB)\\
-&\log\hat{P}(y) - \log P(x|y; \paramBA))^2.
\end{aligned}
\label{eq:reg_term}
\end{equation}
Then, we learn the models of the two tasks by minimizing the weighted combination between the original loss functions and the above regularization term. The  algorithm is shown in Algorithm \ref{alg:dsl}.
\begin{algorithm}[!htpb]
	\caption{Dual Supervise Learning Algorithm}
	\label{alg:dsl}
	\begin{algorithmic}
		\STATE \textbf{Input}: Marginal distributions $\hat{P}(x_i)$ and $\hat{P}(y_i)$ for any $i\in[n]$; Lagrange parameters $\lambda_{xy}$ and $\lambda_{yx}$; optimizers $Opt_1$ and $Opt_2$;\\
		\REPEAT
		{
			\STATE Get a minibatch of $m$ pairs $\{ (x_j,y_j) \}_{j=1}^{m}$;
			\STATE Calculate the gradients as follows:
			\begin{equation}
			\begin{aligned}
			G_f = \nabla_{\paramAB}&(1/m)\textstyle{\sum_{j=1}^{m}}\big[\ell_1(f(x_j;\paramAB),y_j) \\
			& + \lambda_{xy}\ell_{\text{duality}}(x_j,y_j;\paramAB, \paramBA)\big];\\
			G_g = \nabla_{\paramBA}&(1/m)\textstyle{\sum_{j=1}^{m}}\big[\ell_2(g(y_j;\paramBA),x_j)\\
			& +\lambda_{yx}\ell_{\text{duality}}(x_j,y_j;\paramAB, \paramBA)\big];
			\end{aligned}
			\end{equation}
			\STATE Update the parameters of $f$ and $g$:
			\STATE $\paramAB \leftarrow Opt_1(\paramAB, G_f)$, $\paramBA \leftarrow Opt_2(\paramBA, G_g)$.}
		\UNTIL{models converged}
	\end{algorithmic}
\end{algorithm}

In the algorithm, the choice of optimizers $Opt_1$ and $Opt_2$ is quite flexible. One can choose different optimizers such as Adadelta~\cite{zeiler2012adadelta}, Adam~\cite{kingma2014adam}, or SGD for different tasks, depending on common practice in the specific task and personal preferences. 

\subsection{Discussions}
The duality between tasks has been used to enable learning from unlabeled data in \citep{he2016dual}. As an early attempt to exploit the duality, this work actually uses the exterior connection between dual tasks, which helps to form a closed feedback loop and enables unsupervised learning. For example, in the application of machine translation, the primal task/model first translates an unlabeled English sentence $x$ to a French sentence $y^\prime$; then, the dual task/model translates $y^\prime$ back to an English sentence $x^\prime$; finally, both the primal and the dual models get optimized by minimizing the difference between $x^\prime$ with $x$. In contrast, by making use of the intrinsic probabilistic connection between the primal and dual models, DSL  takes an innovative attempt to extend the benefit of duality to supervised learning.

While $\ell_{\text{duality}}$ can be regarded as a regularization term, it is data dependent, which makes DSL different from Lasso~\cite{tibshirani1996regression} or SVM~\cite{hearst1998support}, where the regularization term is data-independent. More accurately speaking, in DSL, every training sample contributes to the regularization term, and each model contributes to the regularization of the other model. 

DSL is different from the following three learning schemes: (1) \emph{Co-training} focuses on single-task learning and assumes that different subsets of features can provide enough and complementary information about data, while DSL targets at learning two tasks with structural duality simultaneously and does not yield any prerequisite or assumptions on features. (2) \emph{Multi-task learning} requires that different tasks share the same input space and coherent feature representation while DSL does not. (3) \emph{Transfer Learning} uses auxiliary tasks to boost the main task, while there is no difference between the roles of two tasks in DSL, and DSL enables them to boost the performance of each other simultaneously.

We would like to point that there are several requirements to apply DSL to a certain scenario: (1) Duality should exist for the two tasks. (2) Both the primal and dual models should be trainable. (3) $\hat{P}(X)$ and $\hat{P}(Y)$ in Eqn.~\eqref{eq:reg_term} should be available. If these conditions are not satisfied, DSL might not work very well. Fortunately, as we have discussed in the paper, many machine learning tasks related to image, speech, and text satisfy these conditions.

\section{Application to Machine Translation}
We first apply our dual supervised learning algorithm to machine translation and study whether it can improve the translation qualities by utilizing the probabilistic duality of dual translation tasks. In the following of the section, we perform experiments on three pairs of dual tasks~\footnote{Since both tasks in each pair are symmetric, they play the same role in the dual supervised learning framework. Consequently, any one of the dual tasks can be viewed as the primal task while the other as the dual task.}: English$\leftrightarrow$French (En$\leftrightarrow$Fr), English$\leftrightarrow$Germany (En$\leftrightarrow$De), and English$\leftrightarrow$Chinese (En$\leftrightarrow$Zh). 

\subsection{Settings}
\noindent\emph{Datasets\ \ } We employ the same datasets as used in~\cite{cho2015using} to conduct experiments on En$\leftrightarrow$Fr and En$\leftrightarrow$De. As a part of WMT’14, the training data consists of $12$M sentences pairs for En$\leftrightarrow$Fr and $4.5$M for En$\leftrightarrow$De, respectively~\cite{wmt14}. We combine \emph{newstest2012} and \emph{newstest2013} together as the validation sets and use \emph{newstest2014} as the test sets. For the dual tasks of En$\leftrightarrow$Zh, we use $10$M sentence pairs obtained from a commercial company as training data. We leverage NIST2006 as the validation set and NIST2008 as well as NIST2012 as the test sets\footnote{The three NIST datasets correspond to Zh$\rightarrow$En translation task, in which each Chinese sentence has four English references. To build the test set for En$\rightarrow$Zh, we use the Chinese sentence with one randomly picked English sentence to form up a En$\rightarrow$Zh validation/test pair. }. Note that, during the training of all three pairs of dual tasks, we drop all sentences with more than $50$ words.

\noindent\emph{Marginal Distributions $\hat{P}(x)$ and $\hat{P}(y)$\ \ } We use the LSTM-based language modeling approach~\cite{sundermeyer2012lstm,mikolov2010recurrent} to characterize the marginal distribution of a sentence $x$, defined as $\prod_{i=1}^{T_x}P(x_i|x_{<i})$, where $x_i$ is the $i$th word in $x$, $T_x$ denotes the number of words in $x$, and  the index $<i$ indicates $\{1,2,\cdots,i-1\}$. More details about such language modeling approach can be referred to Appendix~\ref{app:LM_details}.

\noindent\emph{Model}  We apply the GRU as the recurrent module to implement the sequence-to-sequence model, which is the same as~\cite{bahdanau2015neural,cho2015using}. The word embedding dimension is $620$ and the number of hidden node is $1000$. Regarding  the vocabulary size of the source and target language, we set it as $30$k, $50$k, and $30$k for En$\leftrightarrow$Fr, En$\leftrightarrow$De, and En$\leftrightarrow$Zh, respectively. The out-of-vocabulary words are replaced by a special token UNK. Following the common practice, we denote the baseline algorithm proposed in \cite{bahdanau2015neural,cho2015using} as \emph{RNNSearch}. We implement the whole NMT learning system based on an open source code\footnote{https://github.com/nyu-dl/dl4mt-tutorial}.

\noindent\emph{Evaluation Metrics\ \ } The translation qualities are measured by tokenized case-sensitive BLEU~\cite{papineni2002bleu}  scores, which is implemented by~\cite{multibleu}. The larger the BLEU score is, the better the translation quality is. During the evaluation process, we use beam search with beam width 12 to generate sentences. Note that, following the common practice, the Zh$\rightarrow$En is evaluated by case-insensitive BLEU score.

\noindent\emph{Training Procedure} We initialize the two models in DSL (i.e., the $\paramAB$ and $\paramBA$) by using two warm-start models, which is generated by following the same process as~\cite{cho2015using}. Then, we use SGD with the minibatch size of 80 as the optimization method for dual training. During the training process, we first set the initial learning rate $\eta$ to $0.2$ and then halve it if the BLEU score on the validation set cannot grow for a certain number of mini batches. In order to stabilize parameters, we will freeze the embedding matrix once halving learning rates can no long improve the BLEU score on the validation set. The gradient clip is set as $1.0$, $5.0$ and $1.0$ during the training for En$\leftrightarrow$Fr, En$\leftrightarrow$De, and En$\leftrightarrow$Zh, respectively~\cite{pascanu2013difficulty}. The value of both $\lambda_{xy}$ and $\lambda_{yx}$ in Algorithm \ref{alg:dsl} are set as $0.01$ according to empirical performance on the validation set. Note that, during the optimization process, the LSTM-based language models will not be updated.

\subsection{Results}
Table \ref{tab:overall_statistics} shows the BLEU scores on the dual tasks by the DSL method with that by the baseline RNNSearch method. Note that, in this table, we use (MT08) and (MT12) to denote results carried out on NIST2008 and NIST2012, respectively. From this table, we can find that, on all these three pairs of symmetric tasks, DSL can improve the performance of both dual tasks, simultaneously. 

\begin{table}[!htpb]
	\centering
	\caption{BLEU scores of the translation tasks. ``$\Delta$'' represents the improvement of DSL over RNNSearch.}
	\begin{tabular}{|c|c|c|c|}
		\hline
		Tasks & RNNSearch & DSL & $\Delta$ \\
		\hline
		En$\rightarrow$Fr & $29.92$  & $31.99$ & $2.07$ \\ 
		\hline
		Fr$\rightarrow$En & $27.49$  & $28.35$ & $0.86$\\ 
		\hline
		En$\rightarrow$De & $16.54$  & $17.91$ & $1.37$   \\ 
		\hline 
		De$\rightarrow$En & $20.69$  & $20.81$ & $0.12$   \\ 
		\hline
		En$\rightarrow$Zh (MT08) & $15.45$  & $15.87$ & $0.42$   \\ 
		\hline
		Zh$\rightarrow$En (MT08) & $31.67$  & $33.59$ & $1.92$   \\ 
		\hline
		En$\rightarrow$Zh (MT12) & $15.05$  & $16.10$ & $1.05$   \\ 
		\hline
		Zh$\rightarrow$En (MT12) & $30.54$  & $32.00$ & $1.46$   \\ 
		\hline
	\end{tabular}
	\label{tab:overall_statistics}
\end{table}

To better understand the effects of applying the probabilistic duality constraint as the regularization, we compute the $\ell_\text{duality}$ on the test set by DSL compared with RNNSearch. In particular, after applying DSL to En$\rightarrow$Fr, the $\ell_\text{duality}$ decreases from $1545.68$ to $1468.28$, which also indicates that the two models become more coherent in terms of probabilistic duality.

\cite{cho2015using} proposed an effective post-process technique, which can achieve better translation performance by replacing the ``UNK'' with the corresponding word-level translations. After applying this technique into DSL, we report its results on En$\rightarrow$Fr in Table \ref{tab:statistics_unk}, compared with several baselines with the same model structures as ours that also integrate the ``UNK'' post-processing technique. From this table, it is clear to see that DSL can achieve better performance than all baseline methods.
\begin{table}[!htbp]
	\centering
	\caption{Summary of some existing En$\rightarrow$Fr translations}
	\begin{tabular}{|c|c|c|}
		\hline
		Model & Brief description & BLEU \\
		\hline
		NMT[1] & \emph{standard NMT} & $33.08$ \\
		\hline 
		MRT[2] & \emph{Direct optimizing BLEU} & $34.23$ \\
		\hline
		\hline
		DSL & \emph{ Refer to Algorithm \ref{alg:dsl}} & $\bm{34.84}$ \\
		\hline
		\hline
		\multicolumn{3}{|l|}{[1]~\cite{cho2015using}; [2]~\cite{shen2016minimum}}\\
		\hline
	\end{tabular}
	\label{tab:statistics_unk}
\end{table}

In the previous experiments, we use a warm-start approach in DSL using the models trained by RNNSearch. Actually, we can use stronger models for initialization to achieve even better accuracy. We conduct a light experiment to verify this. We use the models trained by~\cite{he2016dual} as the initializations in DSL on En$\leftrightarrow$Fr translation. We find that BLEU score can be improved from $34.83$ to $\bm{35.95}$ for En$\rightarrow$Fr translation, and from $32.94$ to $\bm{33.40}$ for Fr$\rightarrow$En translation. 
\begin{figure}[!b]
	\centering
	\begin{minipage}{0.5\linewidth}
		\subfigure[Valid BLEU w.r.t $\lambda$]{
			\includegraphics[width=\linewidth]{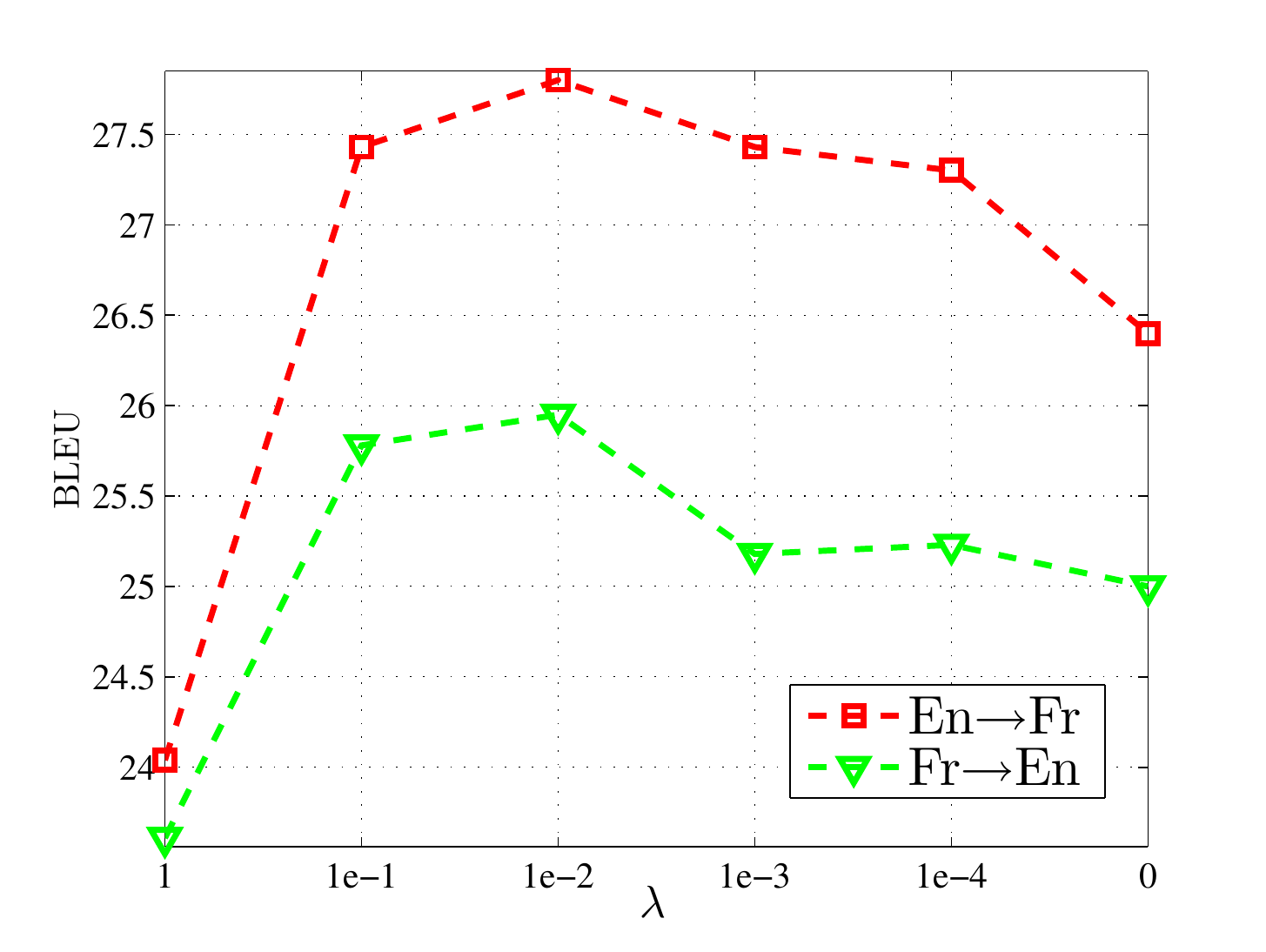}	
		}
	\end{minipage}%
	\begin{minipage}{0.5\linewidth}
		\subfigure[Valid / Test BLEU curves]{
			\includegraphics[width=\linewidth]{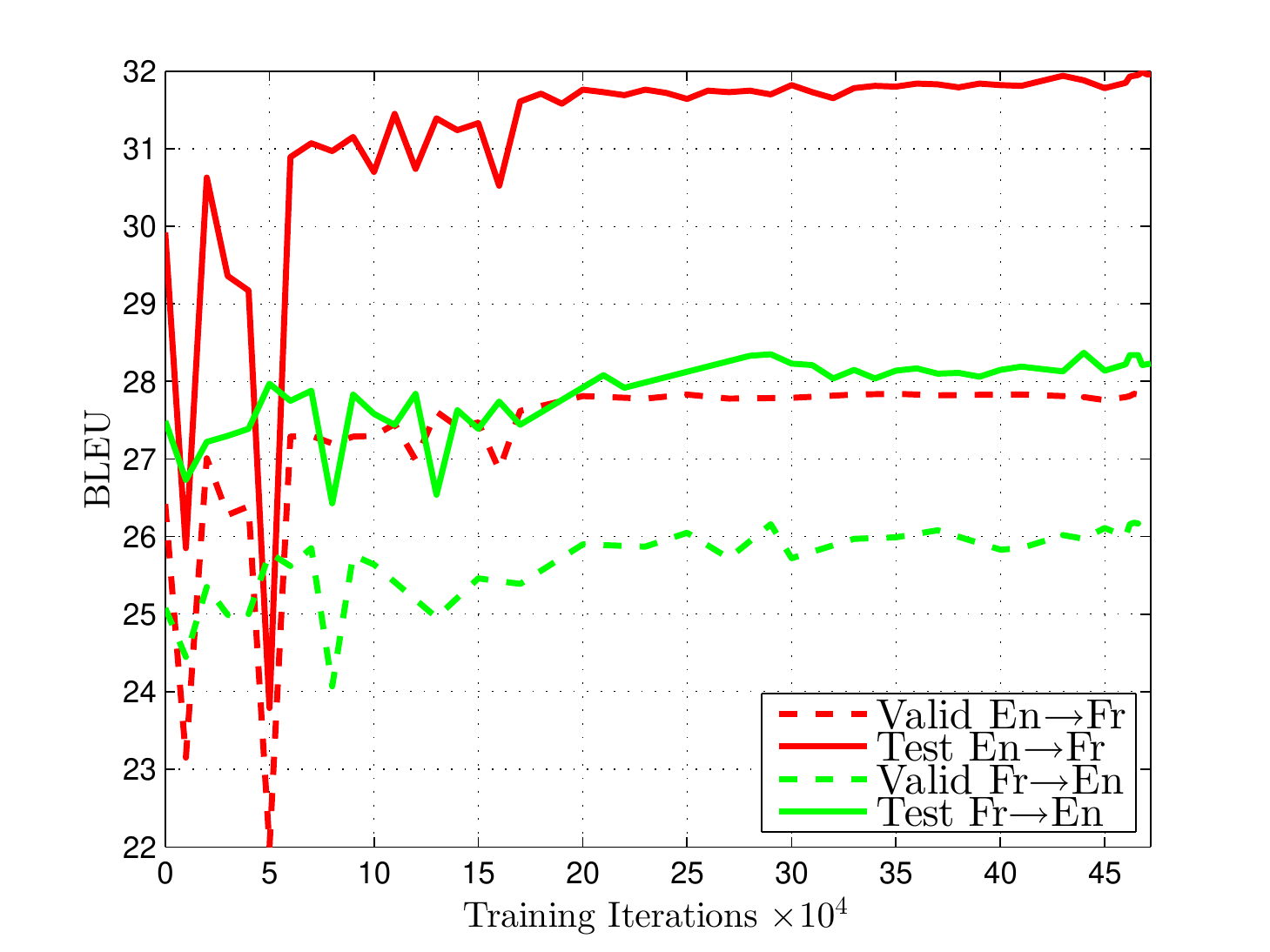}	
		}
	\end{minipage}
	\caption{Visualization of En$\leftrightarrow$Fr tasks with DSL}
	\label{fig:en-fr-visulaziation}
\end{figure}

\noindent{\bf Effects of $\lambda$}\\
There are two hyperparameters $\lambda_{xy}$ and $\lambda_{yx}$ in our DSL algorithm. We conduct some experiments to investigate their effects. Since the input and output space are symmetric,  we set $\lambda_{xy}=\lambda_{yx}=\lambda$ and plot the validation accuracy of different $\lambda$'s in Figure \ref{fig:en-fr-visulaziation}(a). From this figure, we can see that both En$\rightarrow$Fr and Fr$\rightarrow$En reach the best performance when $\lambda=10^{-2}$, and thus the results of DSL reported in Table \ref{tab:overall_statistics} are obtained with $\lambda=10^{-2}$. Moreover, we find that, within a relatively large interval of $\lambda$, DSL outperforms standard supervised learning, i.e., the point with $\lambda=0$. We also plot the BLEU scores for $\lambda=10^{-2}$ on the validation and test sets in Figure \ref{fig:en-fr-visulaziation}(b) with respect to training iterations. We can see that, in the first couple of rounds, the test BLEU curves fluctuate with large variance. The reason is that two separately initialized models of dual tasks yield are not consistent with each other, i.e., Eqn. (1) does not hold, which causes the declination of the performance of both models as they play as the regularizer for each other. As the training goes on, two models become more consistent and finally boost the performance of each other.

\begin{table}[!t]
	\centering
	\caption{An example of En$\leftrightarrow$Fr}
	\begin{tabular}{|l|}
		\hline
		{\small{{\bf[Source (En)]} \emph{A board member at a German blue-chip}}}\\ 
		{\small{\emph{company concurred that when it comes to economic espionage,}}}\\
		{\small{\emph{"the French are the worst."}}} \\
		\hline
		{\small{{\bf [Source (Fr)]} \emph{Un membre du conseil d'administration d'une}}}\\
		{\small{\emph{société allemande renommée estimait \textbf{que lorsqu'il s'agit}}}}\\
		{\small{\emph{d'espionnage économique , « les Français sont les pires » .}}} \\
		\hline
		{\small{{\bf [RNNSearch (Fr$\rightarrow$En)]} \emph{A member of the board of directors}}}\\
		{\small{\emph{of a renowned \textbf{German society} felt that \textbf{when it was} economic}}}\\
		{\small{\emph{ espionage, “the French are the worst. ”}}}\\
		\hline
		{\small{{\bf [RNNSearch (En$\rightarrow$Fr)]} \emph{Un membre du conseil d'une}}}\\
		{\small{\emph{compagnie allemande UNK a reconnu que quand il s'agissait}}}\\
		{\small{\emph{d'espionnage économique, "\textbf{le français est le pire}".}}} \\
		\hline
		{\small{{\bf [DSL (Fr$\rightarrow$En)]} \emph{A board member of a renowned \textbf{German}}}}\\
		{\small{\emph{\textbf{company} felt that \textbf{when it comes to} economic espionage,}}}\\
		{\small{\emph{"the French are the worst. "}}} \\
		\hline
		{\small{{\bf [DSL (En$\rightarrow$Fr)]} \emph{Un membre du conseil d'une compagnie}}}\\
		{\small{\emph{allemande UNK a reconnu que , \textbf{lorsqu'il s'agit d'}espionnage}}}\\
		{\small{\emph{économique, "\textbf{les Français sont les pires}".}}} \\
		\hline
	\end{tabular}
	\label{tab:en-fr-caseStudy}
\end{table}

\noindent{\bf Case studies}\\
Table \ref{tab:en-fr-caseStudy} shows a couple of translation examples produced by RNNSearch compared with DSL. From this table, we  find that DSL demonstrates three major advantages over RNNSearch. First, by leveraging the structural duality of sentences, DSL can result in the improvement of mutual translation, e.g. ``\emph{when it comes to}'' and ``\emph{lorsqu qu’il s’agit de}'', which better fit the semantics expressed in the sentences. Second, DSL can consider more contextual information in translation. For example, in Fr$\rightarrow$En, \emph{une société} is translated to company, however, in the baseline, it is translated to society. Although the word level translation is not bad, it should definitely be translated as ``company'' given the contextual semantics. Furthermore, DSL can better handle the plural form. For example,  DSL can correctly translate ``\emph{the French are the worst}'', which are of plural form, while the baseline deals with it by singular form.

\section{Application to Images Processing}
In the domain of image processing, image classification (image$\rightarrow$label) and image generation (label$\rightarrow$image) are in the dual form. In this section, we apply our dual supervised learning framework to these two tasks and conduct experimental studies based on a public dataset, CIFAR-10~\cite{krizhevsky2009learning}, with 10 classes of images. In our experiments, we employ a popular method, ResNet\footnote{https://github.com/tensorflow/models/tree/master/resnet}, for image classification and a most recent method, PixelCNN++\footnote{https://github.com/openai/pixel-cnn}, for image generation. Let $\mathcal{X}$ denote the image space and $\mathcal{Y}$ denote the category space related to CIFAR-10.

\subsection{Settings}
\emph{Marginal Distributions\ \ } In our experiments, we simply use the uniform distribution to set the marginal distribution $\hat{P}(y)$ of 10-class labels, which means the marginal distribution of each class equals $0.1$. The image distribution $\hat{P}(x)$ is usually defined as $\prod_{i=1}^{m}P\{x_i|x_{<i}\}$, where all pixels of the image is serialized and $x_i$ is the value of the $i$-th pixel of an $m$-pixel image. Note that the model can predict $x_i$ only based on the previous pixels $x_j$ with index $j<i$. We use the PixelCNN++, which is so far the best algorithm, to model the image distribution.  

\emph{Models\ \ } For the task of image classification, we choose 32-layer ResNet (denoted as ResNet-32) and 110-layer ResNet (denoted as ResNet-110) as two baselines, respectively, in order to examine the power of DSL on both relatively simple and complex models. For the task of image generation, we use PixelCNN++ again. Compared to the PixelCNN++ used for modeling distribution, the difference lies in the training process: When used for image generation given a certain class, PixelCNN++ takes the class label as an additional input, i.e., it tries to characterize $\prod_{i=1}^{m}\mathbb{P}\{x_i|x_{<i},y\}$, where $y$ is the 1-hot label vector.

\emph{Evaluation Metrics\ \ } We use the classification error rates to measure the performance of image classification. We use \emph{bits per dimension} (briefly, \emph{bpd})~\cite{tim2017pixel}, to assess the performance of image generation. In particular, for an image $x$ with label $y$, the \emph{bpd} is defined as: 
\begin{equation}
-\big(\textstyle{\sum}_{i=1}^{N_x}\log P(x_i|x_{<i},y)\big)/\big(N_x\log(2)\big),
\end{equation}
where $N_x$ is the number of pixels in image $x$. By using the dataset CIFAR-10, $N_x$ is $3072$ for any image $x$, and we will report the average \emph{bpd} on the test set. 

\emph{Training Procedure\ \ } We first initialize both the primal and the dual models with the ResNet model and PixelCNN++ model pre-trained independently and separately. We obtain a 32-layer ResNet with error rate of $7.65$ and a 110-layer ResNet with error rate of $6.54$ as the pre-trained models for image classification. The error rates of these two pre-trained models are comparable to results reported in~\cite{he2015deep}. We generate a pre-trained conditional image generation model with the test \emph{bpd} of $2.94$, which is the same as reported in~\cite{tim2017pixel}. For DSL training, we set the initial learning rate of image classification model as $0.1$ and that of image generation model as $0.0005$. The learning rates follow the same decay rules as those in~\cite{he2015deep} and~\cite{tim2017pixel}. The whole training process takes about two weeks before convergence. Note that experimental results below are based on the training with $\lambda_{xy}=(30/3072)^2$ and $\lambda_{yx}=(1.2/3072)^2$. 

\subsection{Results on Image Classification}
Table~\ref{tab:resnet_classification_a} compares the error rates of two image classification models, i.e., DSL vs. Baseline, on the test set. From this table, we find that, with using either ResNet-32 or ResNet-110, DSL achieves better accuracy than the baseline method. 

\begin{table}[!t]
	\centering
	\caption{Error rates (\%) of image classification tasks. Baseline is from~\cite{he2015deep}. ``$\Delta$'' denotes the improvement of DSL over baseline.}
	\begin{tabular}{|c|c|c|}
		\hline
		  & ResNet-32 & ResNet-110  \\
		\hline
		baseline & $7.51$ & $6.43$ \\
		\hline
		DSL & $6.82$ & $5.40$\\
		\hline
		$\Delta$ & $0.69$ & $1.03$\\
		\hline
	\end{tabular}
	\label{tab:resnet_classification_a}
\end{table}

Interestingly, we observe from Table~\ref{tab:resnet_classification_a} that, DSL leads to higher relative performance improvement on the ResNet-110 over the ResNet-32. We hypothesize one possible reason is that, due to the limited training data, an appropriate regularization can benefit more to the 110-layer ResNet with higher model complexity, and the duality-oriented regularization $\ell_{\text{duality}}$ indeed plays this role and consequently gives rise to higher relative improvement.

\subsection{Results on Image Generation}
Our further experimental results show that, based on ResNet-110, DSL can decrease the test \emph{bpd} from $2.94$ (baseline) to  $2.93$ (DSL), which is a new state-of-the-art result on CIFAR-10. Indeed, it is quite difficult to improve \emph{bpd} by $0.01$ which though seems like a minor change. We also find that, there is no significant improvement on test \emph{bpd} based on ResNet-32. An intuitive explanation is that, since ResNet-110 is stronger than ResNet-32 in modeling the conditional probability $P(y|x)$, it can better help the task of image generation through the constraint/regularization of the probabilistic duality. 

As pointed out in \cite{theis2015note}, \emph{bpd} is not the only evaluation rule of image generation. Therefore, we further conduct a qualitative analysis by comparing images generated by dual supervised learning with those by the baseline model for each of image categories, some examples of which are shown in Figure \ref{fig:imgGen}. 
\begin{figure}[!htpb]
	\centering
	\includegraphics[width=0.95\linewidth]{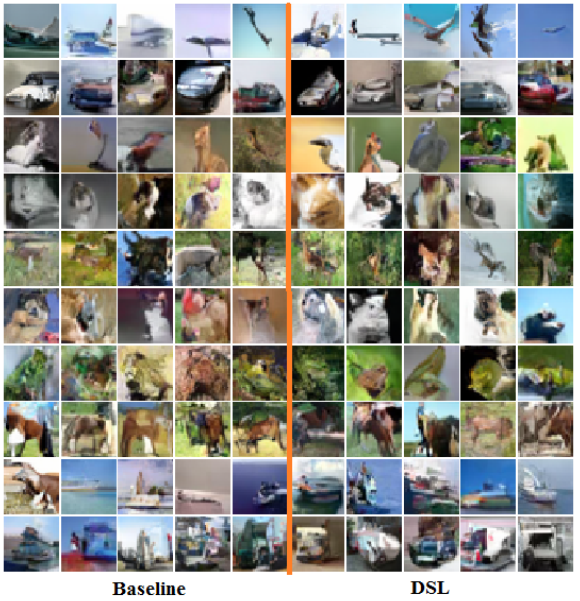}
	\caption{Generated images}
	\label{fig:imgGen}
\end{figure}

Each row in Figure \ref{fig:imgGen} corresponds to one category in CIFAR-10, the five images in the left side are generated by the baseline model, and the five ones in the right side are generated by the model trained by DSL. From this figure, we find that DSL generally generates images with clearer and more distinguishable characteristics regarding the corresponding category. Specifically, those right five images in Row 3, 4, and 6 can illustrate more distinguishable characteristics of birds, cats and dogs respectively, which is mainly due to benefits of introducing the probabilistic duality into DSL. But, there are still some cases that neither the baseline model nor DSL can perform well, like deers it Row 5 and frogs in Row 7. One reason is that the \emph{bpd} of images in the category of deer and frogs are $3.17$ and $3.32$, which are significant larger than the average $2.94$. This shows that the images of these two categories are harder to generate.

\section{Application to Sentiment Analysis}
Finally, we apply the dual supervised learning framework to the domain of sentiment analysis. In this domain, the primal task, sentiment classification~\cite{maas2011learning,dai2015semi}, is to predict the sentiment polarity label of a given sentence; and the dual task, though not quite apparent but really existed, is sentence generation based on a sentiment polarity. In this section, let $\mathcal{X}$ denote the sentences and $\mathcal{Y}$ denote the sentiment related to our task.

\subsection{Experimental Setup}
\noindent\emph{Dataset\ \ } Our experiments are performed based on the IMDB movie review dataset~\cite{IMDB}, which consists of $25$k training and $25$k test sentences. Each sentence in this dataset is associated with either a positive or a negative sentiment label. We randomly sample a subset of $3750$ sentences from the training data as the validation set for hyperparameter tuning and use the remaining training data for model training. 

\noindent\emph{Marginal Distributions\ \ } We simply use the uniform distribution to set the marginal distribution $\hat{P}(y)$ of polarity labels, which means the marginal distribution of positive or negative class equals $0.5$. On the other side, we take advantage of the LSTM-based language modeling to model the marginal distribution $\hat{P}(x)$ of a sentence $x$. The test perplexities~\cite{bengio2003neural} of the obtained language model is $58.74$. 

\noindent\emph{Model Implementation\ \ } We leverage the widely used LSTM~\cite{dai2015semi} modeling approach for sentiment classification\footnote{Both supervised and semi-supervised sentiment classification are studied in ~\cite{dai2015semi}. We focus on supervised learning here. Therefore, we do not compare with the models trained with semi-supervised (labeled + unlabeled) data.} model. We set the embedding dimension as $500$ and the hidden layer size as $1024$. For sentence generation, we use another LSTM model with $W^e_wE_wx_{t-1}+W^e_sE_sy$ as input, where $x_{t-1}$ denotes the $t-1$'th word, $E_w$ and $E_s$ represent the embedding matrices for word and sentiment label respectively, and $W$'s represent the connections between embedding matrix and LSTM cells. A sentence is generated word by word sequentially, and the probability that word $x_t$ is generated is proportional to $\exp(W^d_wE_wx_{t-1}+W^d_sE_sy+W_hh_{t-1})$, where $h_{t-1}$ is the hidden state outputted by LSTM. Note the $W$'s and the $E$'s are the parameters to learn in training. In the following, we call the model for sentiment based sentence generation as contextual language model (briefly, CLM).

\noindent\emph{Evaluation Metrics\ \ } We measure the performance of sentiment classification by the error rate, and that of sentence generation, i.e., CLM, by test perplexity.

\noindent\emph{Training Procedure\ \ } To obtain baseline models, we use Adadelta as the optimization method to train both the sentiment classification and sentence generation model. Then,  we use them to initialization the two models for DSL. At the beginning of DSL training, we use plain SGD with an initial learning rate of $0.2$ and then decrease it to $0.02$ for both models once there is no further improvement on the validation set. For each $(x,y)$ pair, we set $\lambda_{xy}=(5/l_x)^2$ and $\lambda_{yx}=(0.5/l_x)^2$, where $l_x$ is the length of $x$. The whole training process of DSL takes less than two days.

\subsection{Results}
Table \ref{tab:imdb} compares the performance of DSL with the baseline method in terms of both the error rates of sentiment classification and the perplexity of sentence generation. Note that the test error of the baseline classification model, which is $10.10$ as shown in the table, is comparable to the recent results as reported in \cite{dai2015semi}. We have two observations from the table. First, DSL can reduce the classification error by $0.90$ without modifying the LSTM-based model structure. Second, DSL slightly improves the perplexity for sentence generation, but the improvement is not very significant.  We hypothesize the reason is that the sentiment label can merely supply at most 1 bit  information such that the perplexity difference between the language model (i.e., the marginal distribution $\hat{P}(x)$) and CLM (i.e., the conditional distribution $P(x|y)$) are not large, which limits the improvement brought by DSL.

\begin{table}[!htpb]
	\centering
	\caption{Results on IMDB}
	\begin{tabular}{|c|c|c|}
		\hline
		& Test Error (\%) & Perplexity \\
		\hline
		Baseline & $10.10$ & $59.19$ \\
		\hline 
		DSL & $9.20$ & $58.78$ \\
		\hline
	\end{tabular}
	\label{tab:imdb}
\end{table}

\noindent\textbf{Qualitative analysis on sentence generation}

In addition to quantitative studies as shown above, we further conduct qualitative analysis on the performance of sentence generation. Table \ref{tab:clm_case} demonstrates some examples of generated sentences based on sentiment labels.
From this table, we can find that both the baseline model and DSL succeed in generating sentences expressing the certain sentiment. The baseline model prefers to produce the sentence with those words yielding high-frequency in the training data, such as the ``\emph{the plot is simple/predictable, the acting is great/bad}'', etc. This is because the sentence generation model itself is essentially a language model based generator, which aims at catching the high-frequency words in the training data. Meanwhile, since the training of CLM in DSL can leverage the signals provided by the classifier, DSL makes it more possible to select those words, phrases, or textual patterns that can present more specific and more intense sentiment, such as ``\emph{nothing but good, 10/10, don't waste your time}'', etc. As a result, the CLM in DSL can generate sentences with richer expressions for sentiments.

\begin{table}[!t]
	\centering
	\caption{Sentence generation with given sentiments}
	\begin{tabular}{|c|l|}
		\hline
		& {\small{\emph{i've seen this movie a few times. it's still one of my}}}\\
		\small{\emph{Base}} &{\small{\emph{favorites. the plot is simple, the acting is great.}}}\\
		\small{\emph{(Pos)}}&{\small{\emph{It's a very good movie, and i think it's one of the}}}\\
		&{\small{\emph{best movies i've seen in a long time.}}} \\
		\hline
		& {\small{\emph{\textbf{I have nothing but good things to say about this} }}}\\
		&{\small{\emph{\textbf{movie}. I saw this movie when it first came out,}}}\\
		\small{\emph{DSL}} &{\small{\emph{and I had to watch it again and again. I really}}}\\
		\small{\emph{(Pos)}} &{\small{\emph{enjoyed this movie. I thought it was a very good}}}\\
		&{\small{\emph{movie. The acting was great, the story was great.}}}\\
		&{\small{\emph{\textbf{I would recommend this movie to anyone.}}}} \\
		&{\small{\emph{\textbf{I give it 10 / 10.}}}}\\
		\hline
		&{\small{\emph{after seeing this film, i thought it was going to be}}}\\
		\small{\emph{Base}}&{\small{\emph{one of the worst movies i've ever seen; the acting}}}\\
		\small{\emph{(Neg)}}&{\small{\emph{was bad, the script was bad. the only thing i can}}}\\
		&{\small{\emph{say about this movie is that it's so bad.}}}\\
		\hline
		& {\small{\emph{this is a difficult movie to watch, and would, \textbf{not}}}}\\
		\small{\emph{DSL}}&{\small{\emph{\textbf{recommend it to anyone}. The plot is predictable,}}}\\
		\small{\emph{Neg}}&{\small{\emph{the acting is bad, and the script is awful. }}}\\
		&{\small{\emph{\textbf{Don't waste your time on this one.}}}}\\
		\hline
	\end{tabular}
	\label{tab:clm_case}
\end{table}

\subsection{Discussions}
In previous experiments, we start DSL training with well-trained primal and dual models. We conduct some further experiments to verify whether warm start is a must for DSL. (1) We train DSL from a warm-start sentence generator and a cold-start (randomly initialized) sentence classifier. In this case, DSL achieves a classification error of $9.44\%$, which is better than the baseline classifier in Table~\ref{tab:imdb}. (2) We train DSL from a warm-start classifier and a cold-start sentence generator. The perplexity of the generator after DSL training reach $58.79$, which is better than the baseline generator. (3) We train DSL from both cold-start models. The final classification error is $9.50$\% and the perplexity of the generator is 58.82, which are both better than the baselines. These results show that the success of DSL does not necessarily require warm-start models, although they can speed up the training of DSL.  

\section{Conclusions and Future Work}
Observing the existence of structure duality among many AI tasks, we have proposed a new learning framework, dual supervised learning, which can greatly improve the performance for both the primal and the dual tasks, simultaneously. We have introduced a probabilistic duality term to serve as a data-dependent regularizer to better guide the training. Empirical studies have validated the effectiveness of dual supervised learning.

There are multiple directions to explore in the future. First, we will test dual supervised learning on more dual tasks, such as speech recognition and speech synthesis. Second, we will enrich theoretical study to better understand dual supervised learning.  Third, it is interesting to combine dual supervised learning with unsupervised dual learning~\cite{he2016dual} to leverage unlabeled data so as to further improve the two dual tasks. Fourth, we will combine dual supervised learning with dual inference~\cite{DI2017xia} so as to leverage structural duality to enhance both the training and inference procedures.

\appendix
{\Large{\em \bf Appendix}}	
\section{Theoretical Analysis}\label{app:theory_analysis}

As we know, the final goal of the dual learning is to give correct predictions for the unseen test data. That is to say, we want to minimize the (expected) risk of the dual models, which is defined as follows\footnote{The parameters $\paramAB$ and $\paramBA$ in the dual models will be omitted when the context is clear.}: 
\begin{equation*}
		R(f,g) = \mathbb{E}\left[\frac{\ell_1(f(x),y) +\ell_2(g(y),x)}{2} \right], \forall f\in\mathcal{F}, g\in\mathcal{G},
\end{equation*}
where $\mathcal{F}=\{ f(x;\paramAB); \paramAB\! \in\! \Theta_{xy} \}$, $\mathcal{G}=\{ g(x;\paramBA); \paramBA\in \Theta_{yx}\}$, $\Theta_{xy}$ and $\Theta_{yx}$ are parameter spaces, and the $\mathbb{E}$ is taken over the underlying distribution $P$. Besides, let $\mathcal{D}$ denote the product space of the two models satisfying probabilistic duality, i.e., the constraint in Eqn.(4). For ease of reference, define $\mathcal{H}_{\text{dual}}$ as $(\mathcal{F}\times\mathcal{G})\cap\mathcal{D}$.
	
Define the empirical risk on the $n$ sample as follows: for any $f\in\mathcal{F}, g\in\mathcal{G}$,
	\begin{equation*}
		R_n(f,g) = \frac{1}{n} {\sum}_{i=1}^{n}\frac{\ell_1(f(x_i),y_i) +\ell_2(g(y_i),x_i)}{2}.
	\end{equation*}
	
	Following~\cite{bartlett2002rademacher}, we introduce Rademacher complexity for dual supervised learning, a measure for the complexity of the hypothesis.
	\begin{definition}
		Define the Rademacher complexity of DSL, $\mathfrak{R}_n^{\text{DSL}}$, as follows:
		\begin{small}
			\begin{equation*}
				\mathfrak{R}_n^{\text{DSL}} = \underset{\bm{z},\sigma}{\mathbb{E}}\Big[ \sup_{(f,g)\in\mathcal{H}_{\text{dual}}}\big|\frac{1}{n}\sum_{i=1}^{n}\sigma_i\big(\ell_1(f(x_i),y_i) + \ell_2(g(y_i),x_i)\big) \big|  \Big],
			\end{equation*}
		\end{small}
		where $\bm{z}=\{z_1,z_2,\cdots,z_n\}\sim P^n$, $z_i=(x_i,y_i)$ in which $x_i\in\mathcal{X}$ and $y_i\in\mathcal{Y}$,  $\bm{\sigma}=\{\sigma_1,\cdots,\sigma_m \}$ are i.i.d sampled with $P(\sigma_i=1)=P(\sigma_i=-1)=0.5$.
		\label{def:ra_dsl}
	\end{definition}
	Based on $\mathfrak{R}_n^{\text{DSL}}$, we have the following theorem for dual supervised learning:
	\begin{theorem}[\cite{mohri2012foundations}] 
		Let $\frac{1}{2}\ell_1(f(x),y)+ \frac{1}{2}\ell_2(g(y),x)$ be a mapping from $\mathcal{X}\times\mathcal{Y}$ to $[0,1]$. Then, for any $\delta\in(0,1)$, with probability at least $1-\delta$, the following inequality holds for any $(f,g)\in\mathcal{H}_{\text{dual}}$,
		\begin{equation}
			R(f,g)\le R_n(f,g)+	2\mathfrak{R}_n^{\text{DSL}} + \sqrt{\frac{1}{2n}\ln(\frac{1}{\delta})}.
		\end{equation}	
		\label{thm:estimation_error}
		\vspace{-0.2in} 
	\end{theorem}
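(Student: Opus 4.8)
The plan is to regard the averaged loss as a single function bounded in $[0,1]$ and run the textbook two-step Rademacher argument — concentration by McDiarmid's bounded-differences inequality, followed by symmetrization against a ghost sample — while being careful that every supremum is restricted to the constrained class $\mathcal{H}_{\text{dual}}$ rather than to the full product $\mathcal{F}\times\mathcal{G}$. Write $\phi_{f,g}(x,y) = \frac{1}{2}\ell_1(f(x),y) + \frac{1}{2}\ell_2(g(y),x)$, which by hypothesis maps into $[0,1]$; then $R(f,g) = \mathbb{E}[\phi_{f,g}]$ and $R_n(f,g) = \frac{1}{n}\sum_{i=1}^{n}\phi_{f,g}(z_i)$. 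Define the uniform deviation
\[
\Phi(\bm{z}) = \sup_{(f,g)\in\mathcal{H}_{\text{dual}}}\big(R(f,g) - R_n(f,g)\big),
\]
so that it suffices to control $\Phi(\bm{z})$ with high probability and then rearrange to isolate $R(f,g)$.

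First I would establish the concentration term. Since each $\phi_{f,g}$ lies in $[0,1]$ and enters $R_n$ with weight $1/n$, replacing a single sample $z_i$ by any $z_i'$ changes $R_n(f,g)$ — and hence the supremum $\Phi$ — by at most $1/n$. McDiarmid's inequality with these bounded differences yields, with probability at least $1-\delta$,
\[
\Phi(\bm{z}) \le \mathbb{E}_{\bm{z}}[\Phi(\bm{z})] + \sqrt{\tfrac{1}{2n}\ln(\tfrac{1}{\delta})},
\]
which already accounts for the final term in the claimed inequality.

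Next I would bound $\mathbb{E}_{\bm{z}}[\Phi]$ by the Rademacher complexity through symmetrization. Introducing an independent ghost sample $\bm{z}'=\{z_1',\dots,z_n'\}\sim P^n$, I use $R(f,g)=\mathbb{E}_{\bm{z}'}[\frac{1}{n}\sum_i\phi_{f,g}(z_i')]$ together with Jensen's inequality to move the ghost expectation outside the supremum, obtaining $\mathbb{E}_{\bm{z}}[\Phi]\le\mathbb{E}_{\bm{z},\bm{z}'}\sup_{(f,g)}\frac{1}{n}\sum_i(\phi_{f,g}(z_i')-\phi_{f,g}(z_i))$. Because $z_i$ and $z_i'$ are i.i.d., each summand is symmetric, so inserting Rademacher signs $\sigma_i$ leaves the law of the sum unchanged; splitting the resulting supremum into its ghost and original halves contributes a factor $2$, giving $\mathbb{E}_{\bm{z}}[\Phi]\le 2\,\mathbb{E}_{\bm{z},\sigma}\sup_{(f,g)}\frac{1}{n}\sum_i\sigma_i\phi_{f,g}(z_i)$. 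Substituting $\phi_{f,g}=\frac{1}{2}(\ell_1+\ell_2)$ and bounding the signed average by its absolute value recovers Definition~\ref{def:ra_dsl}; the factor $\frac{1}{2}$ then absorbs one of the two factors, so that $\mathbb{E}_{\bm{z}}[\Phi]\le\mathfrak{R}_n^{\text{DSL}}\le 2\mathfrak{R}_n^{\text{DSL}}$. Combining this with the McDiarmid step and rearranging yields the theorem.

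The argument is essentially standard, so the genuine care lies in bookkeeping rather than in any deep obstacle. The two points to verify are (i) that the bounded-differences constant is exactly $1/n$, which makes the concentration term match $\sqrt{\tfrac{1}{2n}\ln(\tfrac{1}{\delta})}$, and (ii) the constant tracking in symmetrization, where the doubling from the two-term split, the $\frac{1}{2}$ averaging in the risk, and the absolute value in Definition~\ref{def:ra_dsl} must be reconciled — these leave headroom in the stated leading constant $2$. I would also note that restricting every supremum to $\mathcal{H}_{\text{dual}}\subseteq\mathcal{F}\times\mathcal{G}$ never breaks the chain of inequalities, since it only shrinks each supremum; thus the probabilistic-duality constraint enters solely by reducing the complexity $\mathfrak{R}_n^{\text{DSL}}$, which is precisely where the advantage of DSL over independent training is meant to appear.
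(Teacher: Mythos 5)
Your proof is correct and follows exactly the standard McDiarmid-plus-symmetrization argument from the cited reference \cite{mohri2012foundations}; the paper itself gives no proof, simply importing the theorem, so there is nothing to diverge from. Your constant bookkeeping is right --- indeed, because the paper's $\mathfrak{R}_n^{\text{DSL}}$ is defined with the sum $\ell_1+\ell_2$ rather than the average, your chain actually yields the slightly sharper bound $R(f,g)\le R_n(f,g)+\mathfrak{R}_n^{\text{DSL}}+\sqrt{\tfrac{1}{2n}\ln(\tfrac{1}{\delta})}$, of which the stated inequality is an immediate consequence.
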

	
	Similarly, we define the Rademacher complexity for the standard supervised learning $\mathfrak{R}_n^{\text{SL}}$ under our framework by replacing the $\mathcal{H}_{\text{dual}}$ in Definition \ref{def:ra_dsl} by $\mathcal{F}\times\mathcal{G}$. With probability at least $1-\delta$, the generation error bound of supervised learning is smaller than $2\mathfrak{R}_n^{\text{SL}} + \sqrt{\frac{1}{2n}\ln(\frac{1}{\delta})}$.
	
	Since $\mathcal{H}_{\text{dual}}\in\mathcal{F}\times\mathcal{G}$, by the definition of Rademacher complexity, we have $\mathfrak{R}_n^{\text{DSL}}\le\mathfrak{R}_n^{\text{SL}}$. Therefore, DSL enjoys a smaller generation error bound than supervised learning.

	The approximation of dual supervised learning is defined as
	\begin{equation}
		\begin{aligned}
			R(f_{\mathcal{F}}^*,g_{\mathcal{F}}^*) - R^*
			\label{eq:error_decoms}
		\end{aligned}
	\end{equation}
	in which
	\begin{align*}
		& R(f^*_{\mathcal{F}},g^*_{\mathcal{F}}) = \inf R(f,g),\;s.t.\;(f,g)\in\mathcal{H}_{\text{dual}};\\
		& R^* = \inf R(f,g).
	\end{align*}
	The approximation error for supervised learning is similarly defined.
	
	Define $\mathcal{P}_{y|x}=\{P(y|x;\paramAB)| \paramAB\in\Theta_{xy}   \}$, \\
	$\mathcal{P}_{x|y}=\{P(x|y;\paramBA)| \paramBA\in\Theta_{yx}   \}$. Let $P^*_{y|x}$ and $P^*_{x|y}$ denote the two conditional probabilities derived from $P$. We have the following theorem:
	\begin{theorem}
		If $P^*_{y|x}\in\mathcal{P}_{y|x}$ and $P^*_{x|y}\in\mathcal{P}_{x|y}$, then supervised learning and DSL has the same approximation error.
	\end{theorem}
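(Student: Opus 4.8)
The plan is to reduce the statement to a single equality between infima. Writing the two approximation errors as $\inf_{(f,g)\in\mathcal{H}_{\text{dual}}}R(f,g)-R^*$ for DSL and $\inf_{(f,g)\in\mathcal{F}\times\mathcal{G}}R(f,g)-R^*$ for supervised learning, and observing that both subtract the \emph{same} Bayes quantity $R^*$, it suffices to prove
\[
\inf_{(f,g)\in\mathcal{H}_{\text{dual}}}R(f,g)=\inf_{(f,g)\in\mathcal{F}\times\mathcal{G}}R(f,g).
\]
Since $\mathcal{H}_{\text{dual}}=(\mathcal{F}\times\mathcal{G})\cap\mathcal{D}\subseteq\mathcal{F}\times\mathcal{G}$, the inequality ``$\ge$'' is immediate from minimizing over a smaller set; the entire content is the reverse inequality, namely that imposing probabilistic duality does not raise the best achievable in-class risk.

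The key step is to exhibit one feasible pair that attains the in-class optimum and simultaneously satisfies the duality constraint. By the realizability hypothesis $P^*_{y|x}\in\mathcal{P}_{y|x}$ and $P^*_{x|y}\in\mathcal{P}_{x|y}$, so there exist parameters $\theta_{xy}^*\in\Theta_{xy}$ and $\theta_{yx}^*\in\Theta_{yx}$ with $P(\cdot\mid x;\theta_{xy}^*)=P^*_{y|x}$ and $P(\cdot\mid y;\theta_{yx}^*)=P^*_{x|y}$. Because $P^*_{y|x}$ and $P^*_{x|y}$ are the two conditionals of the \emph{same} underlying joint $P$, Bayes' rule gives $P(x)P^*_{y|x}=P(x,y)=P(y)P^*_{x|y}$ for every $x,y$; hence this parameter pair satisfies the probabilistic-duality constraint defining $\mathcal{D}$, and the induced predictors $f^*=f(\cdot;\theta_{xy}^*)$, $g^*=g(\cdot;\theta_{yx}^*)$ lie in $\mathcal{D}$ and therefore in $\mathcal{H}_{\text{dual}}$. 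Thus the constraint is automatically met at the realizable solution.

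It then remains to argue that $(f^*,g^*)$ already minimizes $R$ over the whole class, and in fact attains $R^*$. This is where the optimality of the posterior-mode decoder enters: for the loss in use (0--1 loss in the classification case), the Bayes-optimal rule is $\arg\max_{y'}P^*(y'\mid x)$ for the primal task and $\arg\max_{x'}P^*(x'\mid y)$ for the dual task, which are exactly $f^*$ and $g^*$, so $R(f^*,g^*)=R^*$. Chaining, $R^*\le\inf_{\mathcal{H}_{\text{dual}}}R\le R(f^*,g^*)=R^*$ and likewise $\inf_{\mathcal{F}\times\mathcal{G}}R=R^*$; both in-class infima equal $R^*$, so the two approximation errors coincide (indeed both vanish).

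The main obstacle is precisely this last step: one must justify that the decision rule derived from the true conditional is Bayes optimal for the loss at hand. For 0--1 loss this is the classical Bayes-classifier argument, but for a general $\ell_1,\ell_2$ one would either need the loss to be consistent with $\arg\max$ decoding, or instead argue the equality of the two in-class infima directly, using only that the realizable true-conditional pair $(f^*,g^*)$ is a common minimizer of $R$ over $\mathcal{F}\times\mathcal{G}$ and over $\mathcal{H}_{\text{dual}}$ without invoking $R^*$ at all. Everything else—the realizability-to-parameter translation and the Bayes-rule verification of duality—is routine once the decoding optimality is pinned down.
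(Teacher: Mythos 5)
Your proposal is essentially the paper's argument: the paper's entire proof is the single sentence ``By definition, we can verify both of the two approximation errors are zero,'' and your writeup is exactly the fleshed-out version of that claim --- realizability puts the true conditionals in the hypothesis classes, Bayes' rule shows the pair $(P^*_{y|x},P^*_{x|y})$ automatically satisfies the duality constraint and hence lies in $\mathcal{H}_{\text{dual}}$, and therefore both in-class infima coincide with $R^*$. The caveat you flag at the end is real and worth keeping: the conclusion that the realizable pair attains $R^*$ requires the losses $\ell_1,\ell_2$ to be consistent with $\arg\max$ decoding of the true conditional (as with $0$--$1$ loss), an assumption the paper's one-line proof silently makes; your suggested fallback of proving only the equality of the two in-class infima still needs that same consistency, so the hypothesis cannot be dispensed with entirely.
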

	\begin{proof}
		By definition, we can verify both of the two approximation errors are zero.
	\end{proof}
	
	\section{Details about the Language Models for Marginal Distributions}\label{app:LM_details}
	We use the LSTM language models~\cite{sundermeyer2012lstm,mikolov2010recurrent} to characterize the marginal distribution of a sentence $x$, defined as $\prod_{i=1}^{T_x}P(x_i|x_{<i})$, where $x_i$ is the $i$-th word in $x$, $T_x$ denotes the number of words in $x$, and  the index $<i$ indicates $\{1,2,\cdots,i-1\}$. The embedding dimension and hidden node are both $1024$. We apply 0.5 dropout to the input embedding and the last hidden layer before softmax. The validation perplexities of the language models are shown in Table \ref{tab:LM_ppl_stats}, where the validation sets are the same.
	\begin{table}[!htpb]
		\centering
		\caption{Validation Perplexities of Language Models}
		\begin{tabular}{|c|c|c|c|c|c|}
			\hline
			\multicolumn{2}{|c|}{En$\leftrightarrow$Fr} & \multicolumn{2}{|c|}{En$\leftrightarrow$De} & \multicolumn{2}{|c|}{En$\leftrightarrow$Zh}\\
			\hline
			En & Fr & En & De & En & Zh\\
			\hline
			$88.72$ & $58.90$ & $101.44$ & $90.54$ & $70.11$ & $113.43$\\
			\hline
		\end{tabular}
		\label{tab:LM_ppl_stats}
	\end{table}
	
	For the marginal distributions for sentences of sentiment classification, we choose the LSTM language model again like those for machine translation applications. The two differences are: (i) the vocabulary size is $10000$; (ii) the word embedding dimension is $500$. The perplexity of this language model is $58.74$.	
{\small{
\bibliography{dualSLbib}
\bibliographystyle{icml2017}	
}}

\end{document}